\newcommand{\LineComment}[1]{\hfill$\rhd\ $\text{#1}}
\newcommand{\calI}{{\mathcal{I}}}
\newcommand{\calD}{{\mathcal{D}}}
\newcommand{\one}{\boldsymbol{1}}
\DeclareMathOperator*{\argmax}{argmax}
\newcommand{\field}[1]{\mathbb{#1}}
\newcommand{\E}{\field{E}}
\newtheorem{lemma}{Lemma}
\newtheorem{theorem}{Theorem}
\newtheorem{definition}{Definition}
\newcommand{\order}{\ensuremath{\mathcal{O}}}
\newcommand{\xxnote}[3]{}
  \renewcommand{\xxnote}[3]{\color{#2}{#1: #3}}
  \providecommand\BibTeX{{%
    \normalfont B\kern-0.5em{\scshape i\kern-0.25em b}\kern-0.8em\TeX}}}
\titleformat*{\subsubsection}{\Large\it}
\begin{document}
\fancyhead{}

\title{Multi-Armed Bandits with Fairness Constraints for Distributing Resources to Human Teammates}


\author{Houston Claure}
\affiliation{%
  \institution{Cornell University}
  \city{Ithaca}
  \country{New York}}
\email{hbc35@cornell.edu}

\author{Yifang Chen}
\affiliation{%
  \institution{University of Southern California}
  \city{Los Angeles}
  \country{California}}
\email{yifang@usc.edu}

\author{Jignesh Modi}
\affiliation{%
  \institution{University of Southern California}
  \city{Los Angeles}
  \country{California}}
\email{jigneshm@usc.edu}

\author{Malte Jung}
\affiliation{%
  \institution{Cornell University}
  \city{Ithaca}
  \country{New York}}
\email{mfj28@cornell.edu}

\author{Stefanos Nikolaidis}
\affiliation{%
  \institution{University of Southern California}
  \city{Los Angeles}
  \country{California}}
\email{nikolaid@usc.edu }




\begin{abstract}

How should a robot that collaborates with multiple people decide upon the distribution of resources (e.g. social attention, or parts needed for an assembly)? People are uniquely attuned to how resources are distributed. A decision to distribute more resources to one team member than another might be perceived as unfair with potentially detrimental effects for trust. We introduce a multi-armed bandit algorithm with fairness constraints, where a robot distributes resources to human teammates of different skill levels. In this problem, the robot does not know the skill level of each human teammate, but learns it by observing their performance over time. We define fairness as a constraint on the  minimum rate that each human teammate is  selected throughout the task. We provide theoretical guarantees on performance and perform a large-scale user study, where we adjust the level of fairness in our algorithm. Results show that fairness in resource distribution has a significant effect on users' trust in the system.
\end{abstract}
\keywords{Reinforcement Learning; Fairness; Multi-Armed Bandits; Trust}

\maketitle

\section{Introduction}
For robots to function effectively in teams of multiple people, they have to be able to decide about the distribution of resources (e.g. assistance or social attention) \cite{jung2018robot,vazquez2016maintaining,short2017robot}.
Consider a factory robot that assists two workers by delivering parts needed for an engine assembly. One worker is experienced and fast, the other inexperienced and slow. How should a robot take expertise into account when dividing its assistance among workers to achieve optimal outcomes?



 
 The successful adoption of robots as parts teams requires not only the consideration of team performance (e.g. completion time, or cost), but also of team viability, that is the capability of team members to continue to work cooperatively over time \cite{barrick1998relating}. Team viability requires trust. A robot distributing more resources to one worker than another might be perceived as unfair and consequently unstrustworthy \cite{lee2018understanding}. In fact, previous work has shown that ignoring human preferences in task allocation can negatively affect users' willingness to work with the system~\citep{Gombolay2015CoordinationPreferences}. 
 Groom and Nass \citep{Groom2007CanInteraction} argue that our ability to build effective human-robot teams depends on a team's ability to build trust between all members of a team, and much work in human-robot interaction has focused on establishing perceived team fluency and trust in human-robot teams~\citep{gombolay2015decision, Shah2011ImprovedSystem,HuangAdaptiveHandovers,Baraglia2016InitiativeExecution,Chen2018PlanningCollaboration,Shu2018HumanTasks,desai2013impact,desai2012effects,kaniarasu2012potential}.

Here we focus on the notion of \emph{fairness} in resource distribution. We formalize how a robot can take individual expertise into account to maximize  team performance, while guaranteeing that each human teammate will be assigned a minimum rate of resources at any given time throughout the task. Our thesis is that, by accounting for fairness in resource allocation, we can significantly improve users' trust in the system. 

To this end, we cast the problem as a multi-armed bandit, where each human teammate is represented as an arm with an unknown reward function corresponding to their skill level. We then 
propose a \emph{multi-armed bandit algorithm with fairness constraints}, which builds upon the standard Upper Confidence Bound (UCB) algorithm~\citep{Auer2002Finite-timeProblem}. We propose a stochastic version of the algorithm, where a minimum pulling rate for each arm is satisfied in expectation, and a deterministic version where the constraint is strictly satisfied anytime throughout the task. 
We provide a new definition of regret and theoretical guarantees of performance in the form of regret bounds for both algorithms. 

To assess the effect of fairness on the users, we execute a large-scale user study on a Tetris game, where two players are sequentially assigned a batch of blocks by the algorithm. We selected the Tetris game, since it emulates collaborative tasks in human-robot interaction where a robot provides resources to human teammates~\cite{jung2018robot,Shah2011ImprovedSystem}, it provides a clear and transparent way to assess the participants' performance and it can model a wide range of task characteristics~\cite{lindstedt2013extreme}.

We implement the algorithm with three  levels of fairness, representing the required minimum allocation rate for each player: 25\%, 33\% and 50\%. Results show that fairness significantly affects the trust of the players that performed worse than their teammates: those in the 33\% condition trusted the system significantly more, compared to the 25\% condition. Surprisingly, we did not observe a decrease in performance in the fairer distributions, even though the stronger player was selected less frequently. On the contrary,  the median scores were higher when fairness increased. These results improve our understanding of the theory and implications of fairness in resource distribution in human-robot teams.

\section{Background}
\subsection{Distributive Fairness in Resource Distribution}

Fairness has been shown to be important for successful collaboration \cite{hackman1976motivation,crandall2018cooperating}. While fairness can be construed in many ways, we adopt a distributive perspective on fairness \cite{alexander1987role} and operationalize it consistent with \cite{lan2010axiomatic} as the degree of which resources are distributed equally to individuals within a group.  While an equal distribution of resources across all members within a group seems ideal, researchers  \cite{Lange1999TheOrientation,FismanWeGiving} have shown that inequalities are deemed appropriate, particularly when they optimize the outcome of the group. Adam's model on equity suggests that allocation decisions are deemed appropriate if they are in proportion with the input of the individual \cite{Adams1965InequityExchange,Stacy1963TowardInequity}. This model has been tested in various laboratory and real world scenarios suggesting that  allocation decisions in groups follow such a model \cite{Graf1971TheRetaliation,Walster1973NewResearch,Berscheid1968RetaliationEquity,Benton1971ProductivityChildren,Lane1971EquityRewards}.  On the other hand, perceived inequalities have a strong impact on individuals' behavior, often motivating them to act contrary to their rational self-interest with the goal of eliminating the inequality~\cite{Lecture2002PsychologicalIncentives,Camerer2003BehavioralInteraction}. Previous work has shown that perceived lack of fairness affects job satisfaction~\cite{Mcfarlin1992DistributiveOutcomes} and
can induce retaliation behavior from the affected party~\cite{skarlicki1997retaliation}. 

Interestingly, recent works have shown that individuals perceive fairness differently when decisions are made by an algorithm, compared to a human \cite{LeeAlgorithmicDivision,lee2018understanding}. As research in HRI advances, robots will be increasingly placed in complex environments where they will be making allocation decisions.  From allocating time, resources, and attention, these robotic systems will require an understanding of the impact their allocation decisions can have on individual and organizational dynamics. 

\subsection{Stochastic Multi-Armed Bandits}

The stochastic multi-armed bandits (MAB) framework without a minimum pulling rate requirement has been theoretically well studied. The gambler is tasked with choosing an  arm, $i$, from $K$ arms at each time step $t = 1,2,3,...,n$. At every time $t$, the gambler pulls an arm $i_t \in [K]$ while simultaneously the environment decides the reward vector $r_t \in [0,1]^K$ from a fixed distribution with expectation $ \E[r_t(i_t)] = \mu(i_t)$. The gambler, however, can only observe $r_t(i_t)$ but not the whole vector. Therefore, the gambler's goal is to pull the sequence of arms, based on the past information, that can maximize the overall accumulated reward. 

The best arm in hindsight is defined as $i^* = \argmax_{i\in[K]} \mu(i)$ and $\mu^* = \mu(i^*)$. We use regret to measure the performance of this algorithm, which is how worse our algorithm performs compared to the benchmark strategy -- always pulling the best arm in each step. \[Reg_T = T \mu^* - \sum_{t=1}^{T} \mu(i_t)\]

An optimal solution to such a problem was proposed as the Upper Confidence Bound (UCB). It was originally introduced by Lai and Robbins \cite{LaiAndherbertrobbins1985AsymptoticallyRules} and expanded by Agrawal \cite{Agrawal1995SampleProblem}. Building upon these works,  Auer, Cesa-Bianchi \& Fisher \cite{Auer2002Finite-timeProblem} introduced the Upper Confidence Bound Algorithm (UCB). In the most basic form of this algorithm, at each time $t$, we estimate the expected reward of each arm by using the mean of its empirical rewards in the past and the number of times it has been pulled, which gives us a confidence interval that the arm will lie in. Then the algorithm proceeds to pick the arm with the largest estimated expected reward.

This work has inspired a family of upper confidence bound variant algorithms for an array of different applications  \cite{Maillard2011ADivergences, KleinbergMulti-ArmedSpaces,Li2012ARecommendation,DudikEfficientBandits,Garivier2008OnMoulines}. For a review of these algorithms we point readers to \cite{Burtini2015ABandit}.

More recent work regarding multi-armed bandits has seen applications towards the improvement of human-robot interaction. Recent work has investigated using a MAB algorithm for the use of an assistive robotic system with the goal of exploring human preferences \cite{Chan2019TheBandit} and assisting human learning~\cite{PandyaHuman-AIBandits}.

Of particular relevance is very recent work on sleeping bandits with fairness constraints~\cite{li2019combinatorial}, in a setting where multiple arms can be played simultaneously and some arms may be unavailable. Fairness is defined as a minimum rate satisfied in expectation and at the end of the task, whereas in our work we  require the rate to be satisfied strictly and anytime throughout the task. Fairness in the context of MABs has also been studied in~\citet{joseph2016fairness}. The definition of fairness there is quite different, in that a worse arm should not be picked compared to a better arm, despite the uncertainty on payoffs. Their proposed algorithm  chooses two arms with equal probability, until it has enough data to deduce the best of the two arms. 

In addition, in parallel to our efforts, independent research \cite{patil2019achieving} has very recently proposed similar definitions of fairness, where a fairness-satisfaction phase -- that ensures that fairness is guaranteed -- is succeeded by a regret minimization phase. We refer the readers to this coming interesting work as well.

\section{Algorithm}
We propose two new algorithms with optimal regret bound guarantees. The original unconstrained UCB algorithm fails in ensuring ``fairness" because when time passes, a large set of ``bad" arms will hardly be used again. Both of the algorithms we propose are based on the unconstrained UCB algorithm, where we adopt the idea of estimating the expected reward of each arm by using the mean of its empirical rewards in the past and the number of times it has been pulled. We prove the following theorems in the Appendix.

\subsection{Strict-rate-constrained UCB Algorithm}
\begin{definition}
Let $S$ be any $K$-elements set whose elements are drawn from $[\frac{1}{v}]$ without replacement. Then define $g: S \rightarrow [K] $ as some one-to-one function.
\end{definition}

\begin{algorithm}[ht]

\caption{Strictly-rate-constrained UCB}
\label{algo:strict}
\begin{flushleft}
\textbf{Input:}  time horizon $T$, arm set $[K]$, minimum pull rate $v$\\

\textbf{Definition:} Denote $UCB_t(i) =  \frac{1}{t-1}\sum_{s=1}^{t-1} r_t(i) \one\{ i_s = i\} + 2\sqrt{\frac{\ln T}{n_{t-1}(i)}}$, and $\tau_j$ be the starting time of block $j$.

\SetAlgoVlined
\textbf{Initialize:} $t=1,j=1,\tau_1 = K+1$, $\tau_j = \tau_1 + \frac{j-1}{v}$. \\
\While{$t \leq K $}{
    Pull arm $i_t = t$\\
    $t\leftarrow t+1$. 
}
\For(\LineComment{$j$ indexes a block}){$j=1,2,3,\ldots$}{
    \While{$t < \tau_{j+1}$}{
        If $t - \tau_j + 1 \in S$, then pull the arm $i_t = g(t - \tau_j + 1)$,  \\
        Otherwise, pull the arm $ i_t = \argmax_{i \in [K]}UCB_t(i)$\\
        $t \leftarrow t+1$
    }
}
\end{flushleft}
\end{algorithm}

The algorithm divides $T$ into blocks with length $\frac{1}{v}$. The algorithm is flexible in that there are multiple choices of $S$ and $g$ that satisfy the minimum rate constraint. For example, if $v = \frac{1}{4}$ and $K = 2$, we can choose $S = \{1,3\}$ and $g(1) = 1, g(3) = 2$, which means we always pull arm $1$ at $\tau_j$ and arm $2$ at $\tau_j+2$ for all $j$ (see implementation in Section~\ref{sec:user_study}).

This algorithm guarantees that \textsc{in practice} the  pulling rate at any time for each arm is at least $v-\epsilon$, by fixing certain time slots where the algorithm will pull the prescheduled arms. Here $\epsilon = 1/t$.\footnote{ We can prove this by observing that at time $t$, the arm $i$ will be pulled at least $\lfloor tv \rfloor$ times according to the pre-schedule. So the pulling rate will be $\frac{\lfloor tv \rfloor}{t} \geq \frac{tv - 1}{t} = v-\frac{1}{t}$.} In other time slots, the algorithm will behave just like the standard UCB.


Now the benchmark strategy for pulling an arm is always pulling the best arm in those non-prescheduled time slots. So the regret definition in this case becomes:
\begin{align*}
    Reg_T = \E_{env} \left[\sum_{t\in \calI} r_t(i^*) - r_t(i_t) \right]
\end{align*}
where $\calI$ is all the non-prescheduled time slots among $T$.

\begin{theorem}
By running Alg.~\ref{algo:strict}, we obtain the regret bound that is close to the original unconstrained UCB, 
\begin{align*}
    Reg_T \leq \sum_{i:\Delta_i > 0}  \left[ \frac{16\ln T}{\Delta_i}\left( \frac{1-Kv}{1-(K-1)v} \right)+2(1-Kv)^2\Delta_i  \right]+ \order (K)
\end{align*}
If $\Delta_i \in [0,1]~\forall i$, we also get the worst case guarantee, 
\begin{align*}
    Reg_T \leq \order (\sqrt{TK\ln T} + K\ln T ) \\
\end{align*}
\end{theorem}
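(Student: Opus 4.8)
The plan is to run the classical UCB1 argument on the set $\calI$ of non-prescheduled rounds and to show that the ``free'' samples collected in the prescheduled rounds shrink the confidence radii fast enough to produce the extra factor $\tfrac{1-Kv}{1-(K-1)v}$. Since $S$ has $K$ elements and $g$ is one-to-one into $[K]$, $g$ is a bijection, so every arm is pulled exactly once per block, every block contributes exactly $\tfrac{1}{v}-K$ rounds to $\calI$, and (by the footnote's estimate) arm $i$ has received at least $\lfloor tv\rfloor$ prescheduled pulls by the end of round $t$. Because the $\calI$-rounds play $\argmax_i UCB_t(i)$, I would write $Reg_T=\sum_{i:\Delta_i>0}\Delta_i\,\E[N_i]$ with $N_i:=\sum_{t\in\calI}\one\{i_t=i\}$ and reduce everything to bounding $\E[N_i]$ for each suboptimal arm.

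For the standard part, let $G$ be the event that $\lvert\hat\mu_{t-1}(j)-\mu(j)\rvert\le 2\sqrt{\ln T/n_{t-1}(j)}$ for every arm $j$ and every $t\le T$. A Hoeffding bound (the radius $2\sqrt{\ln T/n}$ yields tail $2e^{-8\ln T}=2T^{-8}$) combined with a peeling union bound over $n_{t-1}(j)\in\{1,\dots,T\}$ gives $\Pr[G^c]=\order(KT^{-7})$, so $G^c$ contributes only $\order(1)$ to $\E[N_i]$. On $G$ one has $UCB_t(i^*)\ge\mu^*$ and $UCB_t(i)\le\mu(i)+4\sqrt{\ln T/n_{t-1}(i)}$, hence whenever a round $t\in\calI$ plays a suboptimal arm $i$ we must have $n_{t-1}(i)\le 16\ln T/\Delta_i^2=:\ell_i$.

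The new ingredient is converting this into the improved count. Order the $\calI$-pulls of arm $i$ as $t_1<t_2<\cdots$ and suppose the $k$-th of them falls in block $j_k$. By round $t_k$ arm $i$ has had at least $j_k-1$ prescheduled pulls (one per completed block) together with its $k-1$ earlier $\calI$-pulls, so $n_{t_k-1}(i)\ge j_k+k-2$; combined with $n_{t_k-1}(i)\le\ell_i$ this gives $k\le\ell_i-j_k+2$. Conversely the first $j_k$ blocks contain in total only $j_k(\tfrac{1}{v}-K)$ rounds of $\calI$ and all $k$ pulls lie in them, so $k\le j_k(1-Kv)/v$, i.e.\ $j_k\ge kv/(1-Kv)$. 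Eliminating $j_k$ yields $k\cdot\tfrac{1-(K-1)v}{1-Kv}\le\ell_i+2$, so on $G$ we get $N_i\le\tfrac{1-Kv}{1-(K-1)v}(\ell_i+2)$ and hence $\E[N_i]\le\tfrac{1-Kv}{1-(K-1)v}\cdot\tfrac{16\ln T}{\Delta_i^2}+\order(1)$. Multiplying by $\Delta_i$, summing over the suboptimal arms, and tracking the additive slack (the ``$+2$'' above, the peeling sum restricted to $\calI$, and the initial round-robin) — which collapses to the stated $2(1-Kv)^2\Delta_i$ plus $\order(K)$ — gives the first inequality.

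The gap-free bound is the standard threshold trick: since $\tfrac{1-Kv}{1-(K-1)v}\le 1$, the instance bound gives $Reg_T\le\sum_{i:\Delta_i>0}\tfrac{16\ln T}{\Delta_i}+\order(K)$; splitting the arms at $\Delta_i=\delta$, the arms with $\Delta_i\le\delta$ contribute at most $\delta\sum_i N_i\le\delta\lvert\calI\rvert\le\delta T$ while those with $\Delta_i>\delta$ contribute at most $16K\ln T/\delta+\order(K)$, and choosing $\delta$ to balance the two gives $Reg_T=\order(\sqrt{TK\ln T}+K\ln T)$. I expect the main obstacle to be the bookkeeping of the additive offsets in the counting step — the block boundaries $\tau_j$, the floors, partial blocks, and the initial $K$ rounds — since the multiplicative improvement $\tfrac{1-Kv}{1-(K-1)v}$ only materializes once those are all controlled at the $\order(1)$ level; the rest is a cosmetic rewrite of the classical UCB proof.
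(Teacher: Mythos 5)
Your proposal is correct and rests on the same key mechanism as the paper's proof: the prescheduled pulls inflate the total pull count $n_{t-1}(i)$ relative to the count $m_{t-1}(i)$ of UCB-driven pulls by a factor $\tfrac{1-(K-1)v}{1-Kv}$, which is exactly where the improvement $\tfrac{1-Kv}{1-(K-1)v}$ in the gap-dependent term comes from. The organization differs in a standard way: the paper follows the Auer-style template, bounding $\E[m_T(i)] \le m + \sum_{t}\Pr\left[i_t=i \text{ and } m_{t-1}\ge m\right]$ with $m=\lfloor\tfrac{16\ln T}{\Delta_i^2}\cdot\tfrac{1-Kv}{1-(K-1)v}\rfloor$ and controlling each summand via the peeling Hoeffding lemma, whereas you condition on a global clean event $G$ and run a deterministic counting argument over block indices (via $n_{t_k-1}(i)\ge j_k+k-2$ and $j_k\ge kv/(1-Kv)$), which removes the per-round probability bookkeeping; your counting step is arguably the more transparent derivation of the same inequality $n_{t-1}(i)\ge m_{t-1}(i)\bigl(1+\tfrac{v}{1-Kv}\bigr)-\order(1)$ that the paper uses. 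The one thing your route does not reproduce is the paper's refinement that prescheduling also narrows the admissible range $[k_s,k_e]$ of $n_{t-1}(i)$ inside the peeling union bound, shrinking the per-round tail from $1/T$ to roughly $(1-Kv)/T$ and producing the explicit $2(1-Kv)^2\Delta_i$ term; but since that term is $\order(1)$ per arm and your clean-event failure contributes only $\order(K)$ to the regret in total, your bound subsumes the stated one. The additive offsets you flag (block boundaries, partial blocks, the initial round-robin) are indeed all $\order(1)$ per arm, exactly as in the paper, and the gap-free bound via thresholding at $\Delta=\sqrt{K\ln T/T}$ matches the paper's argument.
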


\subsection{Stochastic-rate-constrained UCB Algorithm }

\begin{algorithm}[ht]
\caption{Stochastic-rate-constrained UCB}
\label{algo:stochastic}
\begin{flushleft}
\textbf{Input:}  time horizon $T$, arm set $[K]$, minimum pull rate $v$\\
\textbf{Definition:} Denote $UCB_t(i) =  \frac{1}{t-1}\sum_{s=1}^{t-1} r_t(i) \one\{ i_s = i\} + 2\sqrt{\frac{\ln T}{n_{t-1}(i)}}$.\\

\SetAlgoVlined
\textbf{Initialize:} $t=1,j=1,\tau_1 = K+1$, $\tau_j = \tau_1 + \frac{j-1}{v}$. \\
\While{$t \leq K $}{
    Pull arm $i_t = t$\\ 
    $t\leftarrow t+1$. 
}
\For{$t=K+1,K+2,K+3,\ldots$}{
        With probability $1-Kv$, pull the arm $ i_t = \argmax_{i \in [K]}UCB_t(i)$,  \\
        Otherwise, uniformly pull an arm $i_t$ from all $K$ arms \\
}
\end{flushleft}
\end{algorithm}
This algorithm guarantees that the \textsc{expected} pulling rate at any time for each arm is at least $v$. Instead of rescheduling some arms as in the deterministic algorithm above, this algorithm introduces some randomness. At each time $t$, we ensure that each arm has at least $v$ probability to be pulled; while with $1-Kv$ probability, the algorithm will again pull the arm with the best UCB bound. We denote this distribution over arms as $p_t$ where $p_t(\argmax_{i \in [K]}UCB_t(i))=(1-Kv)+v$ and $p_t(i) = v, \forall i \in [K]\setminus{\argmax_{i \in [K]}UCB_t(i)}$. 

In this case, the benchmark strategy is pulling the best estimated arm with probability $(1-Kv)$ at time $t$, otherwise uniformly drawing a random arm. We present this strategy with the distribution $p^*$ over $K$ arms where $p^*(i^*)=(1-Kv)+v$ and $p^*(i) = v, \forall i \in [K]\setminus{i^*}$. So the regret definition in this case becomes:
\begin{align*}
    Reg_T = \E_{env,learner} \left[ \sum_{t=1}^T \E_{i_t\sim p^*}[r_t(i_t)] - \E_{a_t \sim p_t}[r_t(i_t)] \right] 
\end{align*}

\begin{theorem}
By running Alg.~\ref{algo:stochastic}, we obtain the regret bound that is close to the original unconstrained UCB, 
\begin{align*}
    Reg_T 
    \leq \sum_{a:\Delta_i > 0}  \left[ \min \left\{\frac{16\ln T}{\Delta_i}+(1-Kv)\Delta_i, (1-Kv)\Delta_iT \right\} \right] 
\end{align*}
If $\Delta_i \in [0,1]~\forall i$, we also get the worst case guarantee,
\begin{align*}
   Reg_T < \order (\sqrt{TK\ln T} + K\ln(T))
\end{align*}
\end{theorem}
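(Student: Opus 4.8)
The plan is to first write $Reg_T$ in closed form, reducing it to the familiar task of bounding how often a suboptimal arm maximizes the UCB index, and then to run the classical UCB analysis with one extra ingredient that accounts for the randomized branch. Fix a round $t>K$ and condition on the history $\mathcal{F}_{t-1}$, which determines $p_t$ and in particular $J_t\defeq\argmax_{i\in[K]}UCB_t(i)$; since $\E[r_t(i)\mid\mathcal{F}_{t-1}]=\mu(i)$ for every $i$, the per-round term equals, in conditional expectation, $\sum_i\big(p^*(i)-p_t(i)\big)\mu(i)$. Because $p^*$ and $p_t$ are identical except that $p^*$ places the extra mass $1-Kv$ on $i^*$ while $p_t$ places it on $J_t$, this quantity equals $(1-Kv)\Delta_{J_t}$ (and is $0$ when $J_t=i^*$). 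Summing over $t$, absorbing the $K$ initialization rounds into an $\order(K)$ term, and writing $m_T(i)\defeq|\{t: K<t\le T,\ J_t=i\}|$, we get $Reg_T=(1-Kv)\sum_{i:\Delta_i>0}\Delta_i\,\E[m_T(i)]+\order(K)$, so it suffices to bound $\E[m_T(i)]$ for each suboptimal arm.

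Next I would set up the usual optimism argument. Let $\calE$ be the event that $|\wh\mu_{t-1}(i)-\mu(i)|\le 2\sqrt{\ln T/n_{t-1}(i)}$ for all $i$ and $t$, where $\wh\mu_{t-1}(i)$ is the empirical mean over arm $i$'s $n_{t-1}(i)$ pulls; by Hoeffding and a union bound over arms, rounds, and pull counts, $\Pr[\calE^c]$ is polynomially small in $T$, so $\calE^c$ contributes only $o(1)$ to $\sum_t\Delta_{J_t}$. On $\calE$ we have $UCB_t(i^*)\ge\mu^*$ and $UCB_t(i)\le\mu(i)+4\sqrt{\ln T/n_{t-1}(i)}$, so $J_t=i$ forces $4\sqrt{\ln T/n_{t-1}(i)}\ge\Delta_i$, i.e.\ $n_{t-1}(i)\le\ell_i\defeq\lceil 16\ln T/\Delta_i^2\rceil$. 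Hence, up to an $o(1)$ term, $m_T(i)$ equals $M_i\defeq|\{t: J_t=i,\ n_{t-1}(i)\le\ell_i\}|$.

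The genuinely new step --- and the one I expect to be the crux --- is bounding $\E[M_i]$, because unlike in vanilla UCB ``being the argmax'' no longer implies ``being pulled,'' so the deterministic bound $M_i\le\ell_i$ is lost. The key observation is that whenever $J_t=i$ the algorithm still pulls $i$ with conditional probability $1-Kv$ through the exploit branch, and each such pull strictly increments $n_\cdot(i)$; since the value $n_{t-1}(i)$ at these rounds is strictly increasing and stays in $\{1,\dots,\ell_i\}$, at most $\ell_i$ such pulls can occur while $n_{t-1}(i)\le\ell_i$. To turn this into a bound on $M_i$, write $A_t=\one\{J_t=i,\ n_{t-1}(i)\le\ell_i\}$ and $B_t=\one\{J_t=i,\ n_{t-1}(i)\le\ell_i,\ \text{exploit branch at }t\}$; then $D_t\defeq A_t-\tfrac{1}{1-Kv}B_t$ satisfies $\E[D_t\mid\mathcal{F}_{t-1}]=0$, so $\sum_{t\le T}D_t$ is a bounded martingale with zero expectation, giving $\E[M_i]=\tfrac{1}{1-Kv}\E[\sum_t B_t]\le\tfrac{\ell_i}{1-Kv}$. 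The delicacy is that $M_i$ and the branch indicators are dependent (more exploit-branch pulls make $n$ grow faster, which shortens $M_i$), which is why the martingale identity, rather than a naive ``$M_i$ is a $(1-Kv)$-fraction'' heuristic, is needed.

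Finally I would assemble the two bounds. Combining $\E[m_T(i)]\le \tfrac{\ell_i}{1-Kv}+o(1)$ with the trivial $m_T(i)\le T$ gives a per-arm regret of $\min\big\{\tfrac{16\ln T}{\Delta_i}+(1-Kv)\Delta_i,\ (1-Kv)\Delta_i T\big\}$ up to lower-order terms (the additive $(1-Kv)\Delta_i$ coming from the ceiling in $\ell_i$, and the accumulated $o(1)$'s and initialization contributing $\order(K)$), which is the first claim. For the worst-case bound, fix a threshold $\Delta_0$: arms with $\Delta_i\le\Delta_0$ contribute at most $(1-Kv)\Delta_0\sum_i\E[m_T(i)]\le\Delta_0 T$ since $\sum_i m_T(i)\le T$, while arms with $\Delta_i>\Delta_0$ contribute at most $\sum_i\big(\tfrac{16\ln T}{\Delta_i}+(1-Kv)\Delta_i\big)\le\tfrac{16K\ln T}{\Delta_0}+\order(K)$; choosing $\Delta_0=\sqrt{K\ln T/T}$ balances the two and yields $Reg_T=\order(\sqrt{TK\ln T}+K\ln T)$.
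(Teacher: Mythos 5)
Your proposal is correct, but it reaches the bound by a noticeably different middle step than the paper. Both arguments share the outer skeleton (gap-dependent decomposition, the Hoeffding/union-bound good event with the fictitious-sample trick to handle the random pull count, and the threshold $\Delta_0=\sqrt{K\ln T/T}$ for the worst case), and your decomposition $Reg_T=(1-Kv)\sum_i\Delta_i\E[m_T(i)]$ in terms of argmax-counts is algebraically identical to the paper's, since $\E[n_T(i)]=vT+(1-Kv)\E[m_T(i)]$ and the paper subtracts the benchmark's $vT\sum_i\Delta_i$ explicitly. Where you diverge is in how the "argmax but not necessarily pulled" issue is handled: you count rounds where $i$ maximizes the UCB index and then need the martingale/Wald-type identity $\E[\sum_t A_t]=\frac{1}{1-Kv}\E[\sum_t B_t]\le\frac{\ell_i}{1-Kv}$ to convert exploit-branch pulls back into argmax-rounds, correctly noting that a naive "$(1-Kv)$-fraction" argument fails because of the dependence between branch outcomes and the stopping of the count. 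The paper avoids this device entirely by bounding $\E[n_T(i)]$ directly: it writes $\Pr[i_t=i,\,n_{t-1}(i)>n]\le(1-Kv)\Pr[\mathrm{UCB}_t(i)>\mathrm{UCB}_t(i^*),\,n_{t-1}(i)>n]+v$, so the event is phrased in terms of actual pulls and the classical "at most $n$ free pulls" step goes through verbatim, with the accumulated $vT$ cancelling against the benchmark. Your route is slightly longer but makes transparent why the $(1-Kv)$ factor cancels in the $16\ln T/\Delta_i$ term yet survives in the $(1-Kv)\Delta_iT$ branch. Two cosmetic points: your ceiling yields an additive $\Delta_i$ per arm rather than the stated $(1-Kv)\Delta_i$ (a constant absorbed into your $\order(K)$, which the theorem's first display omits but the paper's own derivation also incurs), and you should dispose of the degenerate case $v=1/K$ separately (the regret is identically zero there) before dividing by $1-Kv$.
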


\section{Evaluation} \label{sec:user_study}

\begin{figure*}[ht]
    \centering
    \includegraphics[width=1.0\linewidth]{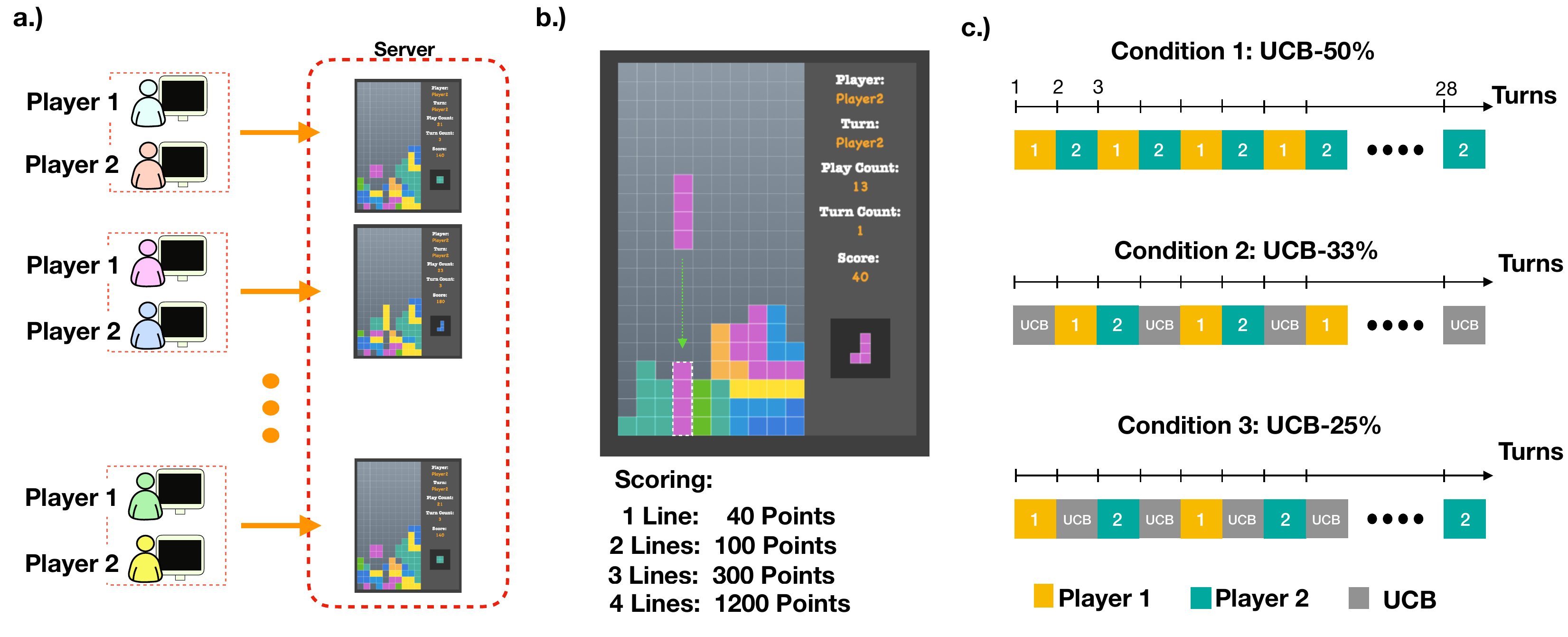}
    \hfil
    \caption{(a) Pairs of two remote human participants were connected to our cooperative Tetris game online. (b) The Tetris game followed the standard rules with two slight modifications. A different scoring metric as shown and only one participant had access to control the pieces per turn. (c) A visual representation of the three separate patterns that each condition offered.}
    \label{setup}
\end{figure*}
To evaluate our algorithm we conducted an online user study. We asked teams of two participants each to collaborate with a robot in completing a cooperative game (a modified version of Tetris). The robot's task is to decide which team-mate receives the next block to place. Similar to other human-robot collaboration tasks (e.g. \cite{hayes2015effective, jung2018robot, gombolay2015decision,shah2011improved,Gombolay2015CoordinationPreferences}), the robot's role is that of a task or resource allocator as it provides a resource needed to complete a task (in this case Tetris blocks) to participants.

We designed a between-teams study with three conditions of increasing distributional fairness constraints maintained by the UCB Algorithm (Alg.~\ref{algo:strict}): (UCB-25\%,UCB-33\%, UCB-50\%).\footnote{ We did not include a UCB-0\% condition, since in our pilot studies in the team-Tetris task the UCB-0\% and UCB-25\% had similar allocation of pieces, because of the variance in the scores and the exploration inherent in the unconstrained UCB-0\%.} To reduce variance from sampling, we implemented the  strict-rate-constrained UCB Algorithm.\footnote{ \url{https://github.com/icaros-usc/MAB_Fairness}}

We characterize the player that has the best performance of the two, as observed at the end of the task, as \emph{strong} and the the other player as \emph{weak.} The challenge of balancing between choosing the historically best player or a sub-optimal player allows us to investigate the impact of the system's decision on team performance, perceived fairness and trust in the system. 
 
Based on prior work which has shown that people react strongly to fairness in resource allocation \cite{brosnan2014evolution, lee2018understanding}, with especially strong reactions occurring for the disadvantaged party (e.g. ~\cite{skarlicki1997retaliation}), we expect that distribution rates (fairness) will have a significant effect on perceived fairness and trust in the system of the weak players. Specifically, as the distribution rates favor the stronger individuals at an increasing rate (UCB-50\%,UCB-33\%,UCB-25\%, respectively) we expect fairness perceptions and trust in the system to decrease \textbf{(H1)}. Furthermore, prior literature has shown that fairness in resource distribution has implications for team performance \cite{Colquitt2001JusticeResearch}. In our case, the fairer distributions favor the weak players, since they impose a constraint on the minimum number of pulls for both players. We expect that this will result in worse performance, compared to the less fair distributions that favor the strong player of the team \textbf{(H2)}.


\subsection{Methodology}
\subsubsection{Participants\nopunct}\hfill\

We recruited 290 participants from Amazon Mechanical Turk (AMT) and paid \$1.00 for their participation in the task. We selected participants with previous ratings of 95\% or higher. 8 data points were removed, since their AMT unique ID did not match the one given on Qualtrics. The final dataset contained N = 94, 98, 90 participants for UCB-50\%, UCB-33\%, and  UCB-25\% respectively (156 female, 124 male, 1 other, 1 did not disclose). The average age of participants was 36 years old (SD = 11). Participants were recruited if they could speak English, were from the United States, and had previous ratings of 95\% or higher. Of the 282 participants, 6 of them reported to have never played Tetris before. 


\subsubsection{Task: Collaborative Tetris\nopunct}\hfill\

 Building on a task developed by \citep{jung2018robot} we developed a collaborative Tetris game that paired teams of two people to complete a game of Tetris together with our MAB algorithm. The goal for each team was to achieve the highest score possible. At each defined time step, the algorithm decides which team-mate should have control over the falling pieces--thus only one human player has control over the set of Tetris blocks at each time step, while both players observe the moves of the blocks at all times. 
 
 
We chose Tetris as a collaborative task, since it has been shown to effectively model a broad range of common task characteristics  \cite{lindstedt2013extreme}, having been used as a testbed for several other studies (e.g. \cite{kirsh1994distinguishing, haier1992regional}). It emulates previous settings where a robot assists users in a collaborative task \cite{jung2018robot,shah2011improved}, providing a transparent and unambiguous way for the participants to observe each other's performance.
  

  \begin{figure*}
    \centering
    \includegraphics[width=1.0\linewidth]{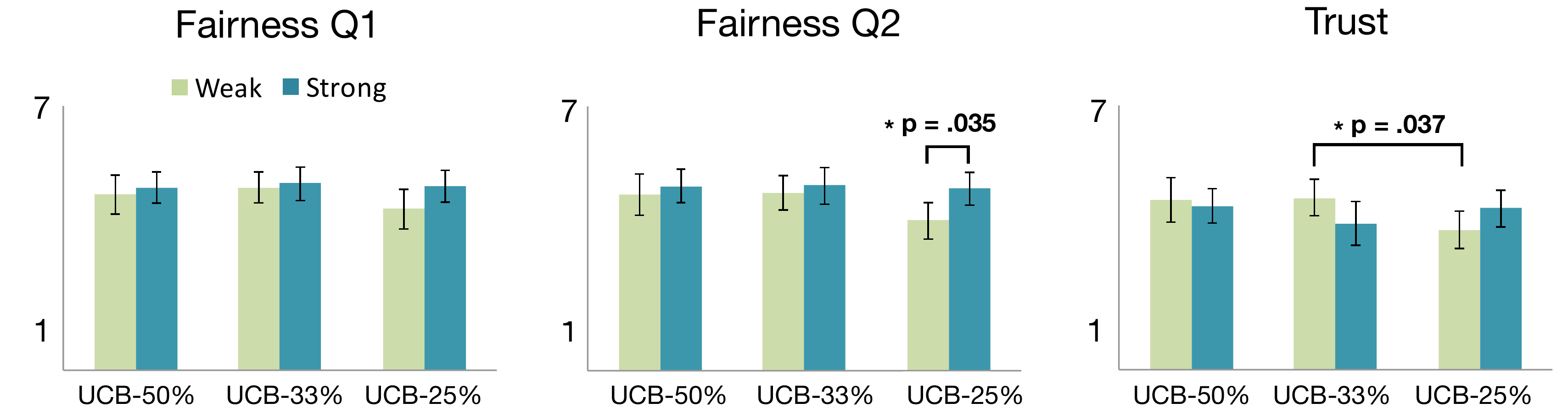}
    \hfil
    \caption{ Responses to the subjective questions, grouped by player performance across each condition. Error bars represent the 95\% confidence intervals. 
    }
    \label{split sub}
\end{figure*}

We formally define our scenario as follows. The number of players in each game is set as $P = \{1,2\}$ over a time horizon of $T = 30$. At each time step $ t \leq T $, seven consecutive Tetris pieces are allotted to a player $p \in P$. In the turns where the UCB algorithm was run, we used as reward $r_{p,t}$: 
\begin{align*}
  r_{p,t} = \frac{S_{p,t}}{M*n_{p,t}}  
\end{align*}
 where $S_{p,t}$ is the score achieved by player  $p$ up to turn $t$, $n_{p,t}$ is the number of turns of that player and $M$ is a maximum value that we selected for normalization. After multiple pilot sessions we empirically set $M$ to 300. 
 
 We defined each time step as a set of seven consecutive falling pieces that only the selected player could control. Our pilot sessions showed that allowing control of seven consecutive pieces together with limiting the width of the Tetris board prevented behaviors where one would place the blocks in such as way that prepares the groundwork for their partner to get the rewards. Observing the players' behavior in the pilot sessions, as well as their responses to questionnaires at the end of the study, confirmed that recorded scores matched observed performance.

\subsubsection{Procedures\nopunct} \hfill\

 Upon providing informed consent and entering basic demographic information,  AMT participants were instructed that they would be paired with a human partner and a robot that would decide who has control of the falling pieces and that the objective was to obtain the largest possible team score. Following standard Tetris rules, a player could rotate, speed up, or drop each falling piece. At the end of the time step the 50\%, 33\%, or 25\% UCB, depending on condition, algorithm would run to select the next player.  

Figure~\ref{setup}(c) shows the pattern of the distribution that was seen across each condition. This pattern was repeated for 30 time steps, with the exception of the first two time steps where each player played once. Each team was exposed to 210 pieces total. A code was given to participants upon the completion of the 30 rounds which enabled them to continue the Qualtrics survey. 




\begin{figure*}[t]
    \centering
    \includegraphics[width=1.0\linewidth]{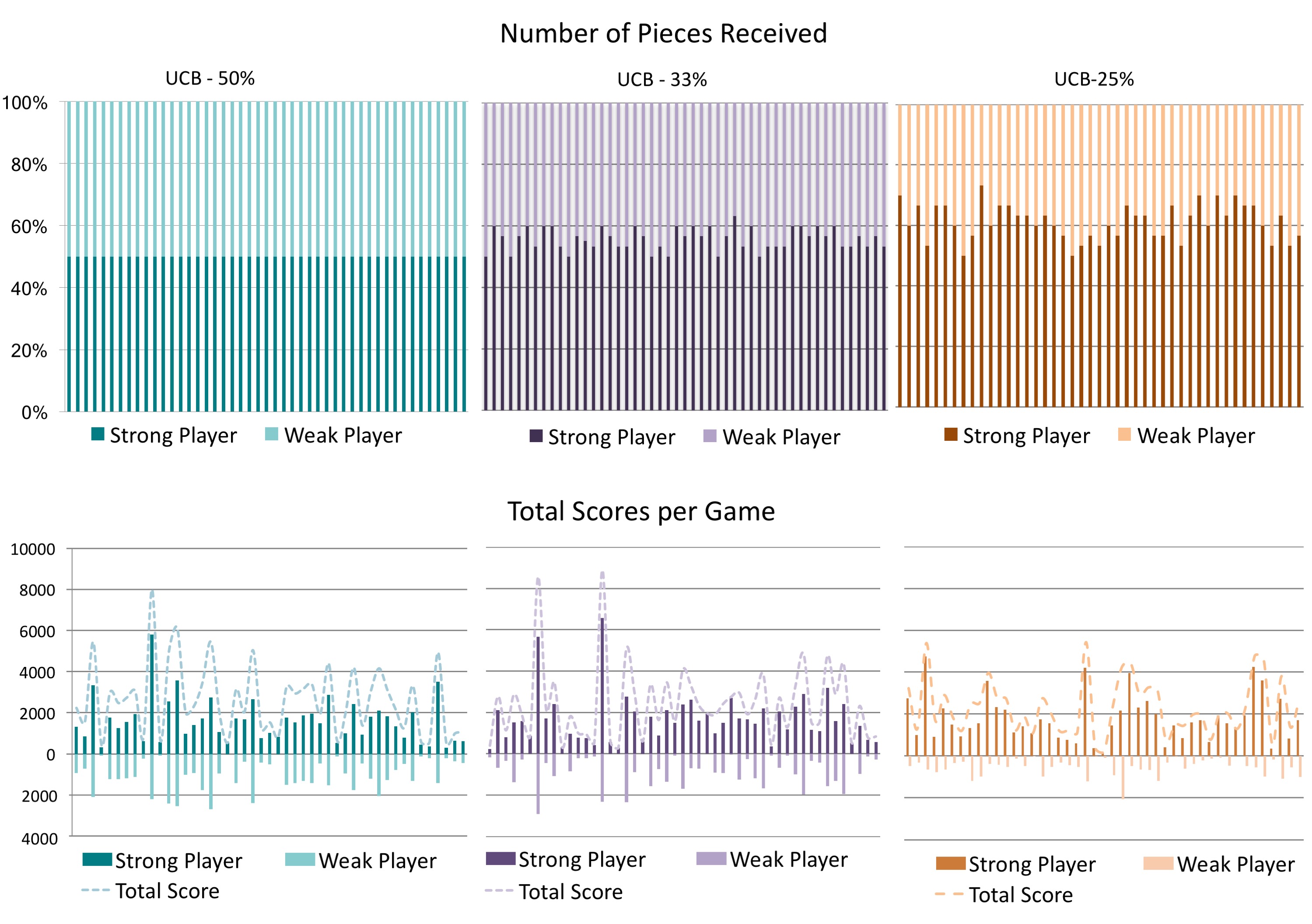}
    \hfil
    \caption{
      (Top) Percentage of the number of pieces that each player received. Each bar represents a separate game. (Bottom) Plots of the total scores that each player individually and both players together achieved. Each bar represents a separate game.
    }
    \label{pieces_and_scores}
\end{figure*}

 \begin{table}[ht]
    \centering
    \caption{Subjective Metrics }
      \vspace{-1em}
    \includegraphics[width=1.0\linewidth]{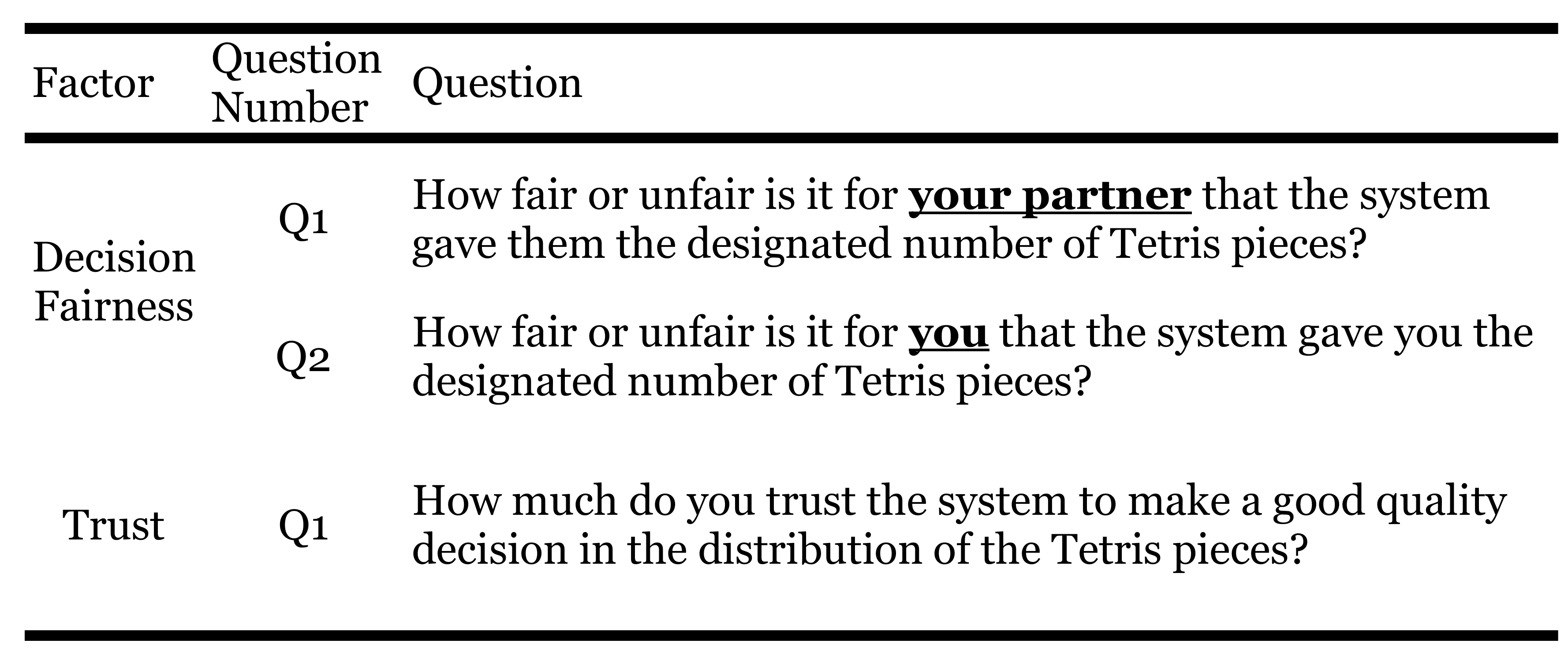} 

       \label{tab:questions}

\end{table}

\subsubsection{Measures\nopunct} \hfill\

\noindent\textbf{Subjective: }To measure levels of perceived \textit{decision fairness } and \textit{trust} we adapted survey questions from \cite{lee2018understanding} (Table~\ref{tab:questions}).  Each response was measured on a seven-point Likert scale. Finally, we asked an open ended question, ``In your own words, describe the strategy that you think the robot used to distribute the Tetris pieces.'' 

\noindent\textbf{Objective: }Information regarding an individual's performance was stored in a database during game play. We collected each player's individual score  as well as the number of turns that was allocated to them. Additionally, we obtained the total score that each team accumulated at the end of the game play. Figure~\ref{setup}(b) shows the scoring convention that we used.


\subsection{Results}

\begin{figure*}
    \centering
    \includegraphics[width=1.0\linewidth]{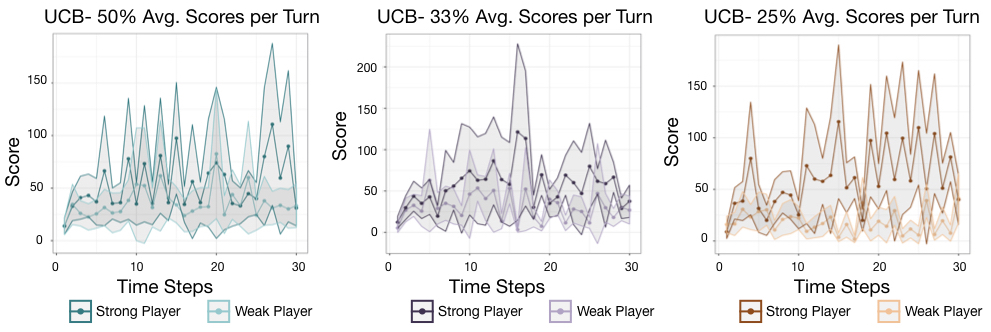}
    \hfil
    \caption{ Average Score per turn across the different conditions with 95\% confidence intervals. Each point represents the average score of all participants at that time step split into weak and strong participants. }
  
    \label{scores_turn}
\end{figure*}

\begin{figure}
    \centering
    \includegraphics[width=1.0\linewidth]{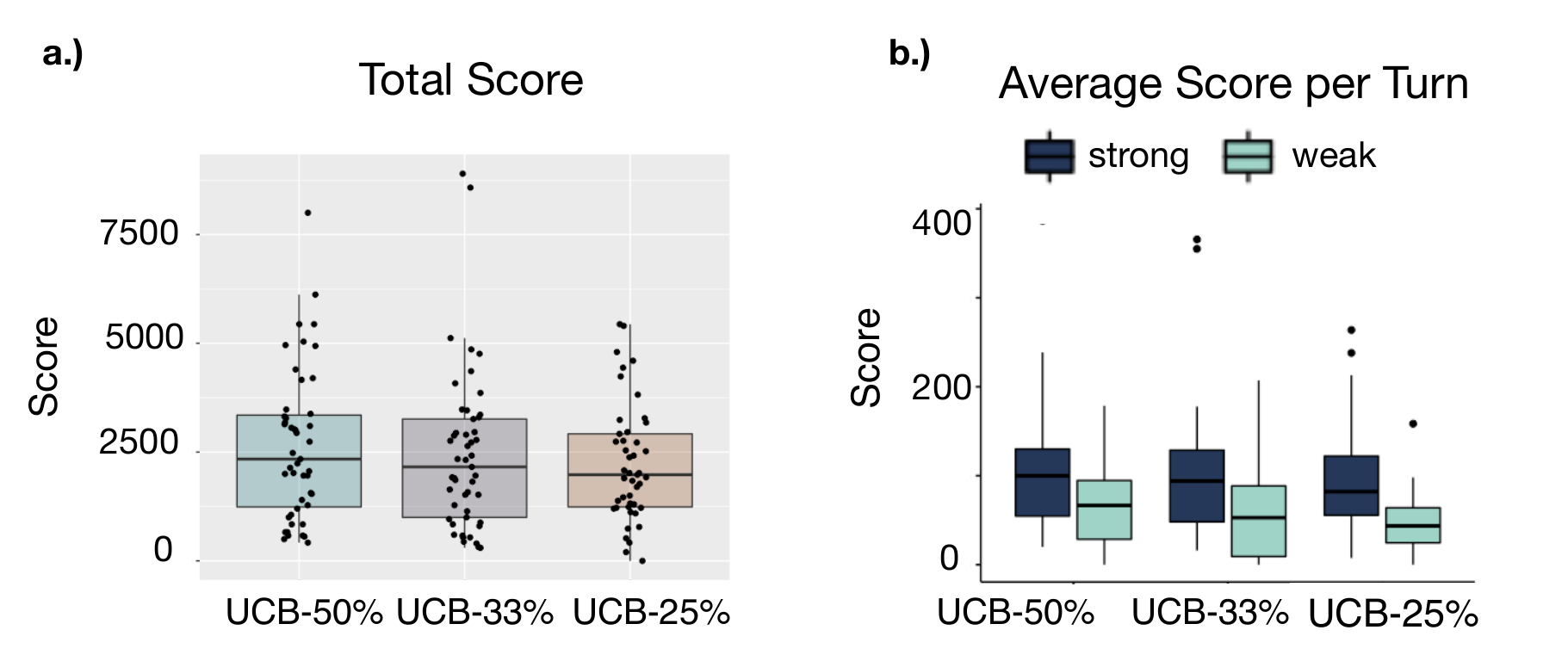}
    \hfil
    \caption{ (a) Total scores for each condition. (b) Average score per turn for each condition. 
    }
   
    \label{objective}
\end{figure}

\noindent\textbf{Subjective}:
  We grouped subjective responses of each pair of players based on their comparative performance in the game (Figure~\ref{split sub}). We focus the analysis on the weak players, that is the players that performed worse than their teammate. We present the responses of the strong players as well for completeness.
  

  A one-way ANOVA was performed for weak players across all conditions (UCB-50\% vs. UCB-33\% vs. UCB-25\%)  for each subjective metric. Analysis indicates a significant effect of the reported trust score of the weak players across the three conditions $(F(2,138)=3.172, p = 0.025)$. A Tukey HSD with adjusted p-values demonstrated higher trust $(p= 0.037)$ towards the system running the UCB-33\% compared to the UCB-25\%. While trust scores in UCB-50\% were higher than in the UCB-25\%, the difference was not significant $(p = 0.061)$. Differences between all other factors were not significant. 

 
\emph{Post-hoc Analysis.} We observed a noticeable difference in the responses between the strong and the weak players for different fairness conditions (Fig.~\ref{split sub}). Therefore, we conducted a post-hoc analysis to assess whether there were significant differences in the responses between the weak and the strong players \textit{within} each condition. Indeed, a 2 x 3 ANOVA with strength (weak vs. strong) and rate (UCB-50\% vs. UCB-33\% vs. UCB-25\%) showed a main effect of players' strength for Decision Fairness Q2 $(F(1,276) = 4.778, p = 0.0297)$. There were no interaction effects. Post-hoc comparison with Bonferroni corrections looking at strength within the different fairness levels, showed that weak players (M = 3.97, $\sigma $= 1.68) reported significantly lower ratings on fairness (Q2) than their strong counterparts (M = 4.82, $\sigma$ = 1.49) in the UCB 25\% condition ($p = 0.035$), which was the least fair condition. We observed no significant difference in perceived fairness between strong and weak players in the other two conditions.

In summary, there was a significant difference  between the weak and the strong players in their perception of fairness in the least fair condition (UCB-25\%), and reducing the minimum rate from 33\% to 25\% had a negative effect on the trust of weak players. On the other hand, Figure~\ref{split sub} shows that trust scores between the UCB-50\% and UCB-33\% conditions were comparable. 

To interpret these results, we observe the number of pieces received (arm pulls) for each condition in Figure~\ref{pieces_and_scores}(top). In the UCB-50\% condition, all players received the same number of pieces regardless of their performance. In the UCB-33\% condition, while the strong players received more pieces, the difference with the weak players was small. On the other hand, in the UCB-25\% condition there were several games where the weak players received less than 30\% of the pieces, resulting in lower reported trust in that condition.

We also examined participants' perceptions of resource distributions. Several participants in the UCB-33\% and UCB-25\% conditions noted how the system appeared to favor the "stronger" player during the gameplay:

\emph{"I felt the more competent player was given more turns. Which makes sense but was why it felt unfair."}

Participants in the UCB-50\% condition noted how the system gave each team member an equal number of turns:

\emph{"I think it was even, it made us take turns one after the other, enough that it made me feel I was making an equal contribution to the game."}


\noindent\textbf{Objective}: A one-way ANOVA on the performance of the two-player teams across the three conditions indicated no statistical significance. In fact, Figure~\ref{objective}(a) shows that the medians of the total scores were higher for increasing levels of fairness. Plotting the individual scores of the players for each game in Figure~\ref{pieces_and_scores}(bottom)  illustrates this tendency as well.

This result does not match our initial hypothesis. To interpret this result, we plot the average scores per turn for each condition in Figure~\ref{objective}(b). The average scores indicate how well the players performed on average every time they took a turn. Interestingly, we see that the distribution of the weak players' scores shifts towards lower scores as fairness decreases. While this result warrants further investigation, it indicates that assigning significantly less pieces to one of the players may negatively affect their performance, in addition to reducing their trust in the system. It showcases the importance of fairness when making resource distribution decisions. 



We further tested for learning effects, since players may get better at the game over time. We fit a linear mixed effects model for the fixed effects of strength (weak vs. strong) and time as a continuous variable while including participants as random effects. We found a statistically significant increase in player scores over time in the UCB-50\% (F(1,1313)= 6.69, p= 0.010) and UCB-25\% condition (F(1,1285)= 5.50, p= 0.019). We found no statistically significant increase for participants in the UCB-33\% condition (F(1,1359)= 0.140, p= 0.708). Indeed, Figure~\ref{scores_turn} shows that the average player scores tended to increase after the first few turns. These changes in performance occur in the first half of the game suggesting that learning effects did not drive our conclusions. Our goal was not to find the "true" stronger or weaker player within each game of Tetris, rather it was to asses how varying levels of distributions affect the team's performance within a given time period. The notions of "strong" and "weak" are with respect to their distribution of scores within the given 30 turns. While we have assumed fixed (albeit stochastic) reward distributions,  Sliding-Window-UCB-based algorithms~\cite{garivier2011upper,wei2018abruptly,cheung2018learning} have been proposed for evolving distributions.We suggest as future work extending these algorithms to account for fairness; providing  theoretical guarantees for these algorithms would follow the same reasoning as in the proofs that we include in the supplemental material.

\section{Discussion}
We proposed a novel algorithm for a robot's resource distribution for human robot collaboration scenarios that include multiple human team members. Specifically, our MAB variant algorithm aimed to maintain a level of fairness by administering a minimum rate constraint limiting the number of times an individual may be assigned a resource.

An evaluation of the algorithm in a collaborative Tetris game showed that optimizing not only based on team member expertise, but also based on distributive fairness, lead to higher trust without a decline in team performance. Specifically, our study revealed a statistically significant difference between the weak players in the UCB-33\% and the UCB-25\% conditions, partially supporting H1. We conjecture that this is due to the fact that weaker players in the UCB-25\%  were exposed to longer waiting periods and smaller number of turns than that of weaker players in the UCB-33\%. This highlights the result of unfair distribution on trust, particularly that of weaker performing individuals.  


Contrary to our expectations, our study did not find any differences in perceived fairness across the UCB-50\% , UCB-33\%, and UCB-25\% conditions. Since the goal was stated as to maximize the team's overall score, some participants may have seen it as appropriate for the stronger player to receive more turns. For these participants, distributions that favoured one of the players might have been seen as procedurally fair. Procedural fairness "refers to the perception by the individual that a particular activity in which they are a participant is conducted fairly" \citep{culnan1999information}. In other words the algorithmic or procedural nature of the distribution may have contributed to a perception of fairness irrespective of the actual distribution. This interpretation is consistent with interview reports from some participants, for instance: 

\emph{"I think it was judging that the other player was much better than me and thus they deserved to play more pieces than myself. "}

\noindent

Additionally, we found no statistically significant differences in perceived fairness between the strong and weak participants within the UCB-50\%, UCB-33\%  conditions. In these conditions the number of turns between the strong and weak participants did not differ much (Figure \ref{pieces_and_scores}), suggesting that many participants did not perceive the distribution as unfair as they experienced almost equal participation in the game (and exactly equal participation in the UCB-50\% condition). On the other hand, the UCB-25\% condition saw a larger difference in the number of turns between weak and strong players and that lead to a statistically significant difference between the two groups when asked about the perceived fairness of the distribution (Decision Fairness Q2).
 
Finally, we see that the performance of the teams was not significantly impacted by the difference in resource distribution, which does not support H2. A possible reason is that, when weaker participants were limited in the number of turns and participation within the team, they might have put lower effort, affecting the overall performance, as shown by the low average scores of the weak players in UCB-25\% in Figure~\ref{objective}(b).

These results are in line with previous work that highlights the importance of perceived allocation fairness on trust \cite{Cohen-Charash2001TheMeta-analysis,Colquitt2001JusticeResearch}. As robots are increasingly placed in contexts where they are faced with allocation decisions, our work contributes important initial insights on a robot's impact on groups.



\subsection{Limitations}

In interpreting the findings from our evaluation, we need to address several limitations. First, the evaluation of our algorithm was conducted in the context of an online game which opens question about the generalizability of our findings to human-robot teamwork with physically embodied robots. While we cannot say how physical robot characteristics influence perceptions of trust and fairness, our overall approach to focus on resource distribution matches a scenario that has been proposed by Jung and colleagues \citep{jung2018robot} in the context of physical human robot collaboration. Moreover, their work highlights how a robot's distribution of resources impacts team satisfaction yet does not influence the team's performance on a task, which is consistent with the results of our collaborative Tetris game. Our work extends this previous work \citep{jung2018robot}, by examining reactions to an autonomous system rather than one based on Wizard of Oz control.

In our theoretical contribution, we have assumed the same minimum allocation rate for all arms. One can easily extend the proposed algorithms and theoretical results by defining a vector of different rates for each arm. By  following the same reasoning as in the proofs provided in the supplemental material, the reader can verify that compared to an oracle that satisfies the same constraints, the regret will not be worse than unconstrained UCB.

A limitation of our paper is that Algorithm~\ref{algo:strict} allows for a set of possible schedules that satisfy the minimum rate constraint, based on our choice of $S$ and $g$. For instance, in the UCB-25\% condition we chose to play the arm with the highest UCB bound in the second and fourth timeslot, but we could also select the first and second timeslot. In fact, given a minimum rate $v$ there are $\frac{(1/v)!}{(1/v-K)!}$ permutations, and we have not captured the effect of different schedules within a fairness condition. 

\section{Conclusion}
This work explored the impact of resource allocation fairness by a system on human teams. We formulated the problem of distributing resources within a team as a multi armed bandit problem and developed two algorithms that constrains the number of resources an individual may acquire. Applying our algorithm with three distinct constraint rates (25\%, 33\%, 50\%) as independent variables, we explored team member perception's of trust and fairness through a cooperative Tetris game. Results from our user study suggest that fairness in resource allocation can influence trust of weaker performing individuals but may not have an impact on the overall performance of the team. 

Our work adds a new and novel framework for studying an important yet underexplored topic: robots in human teams. We demonstrated how fairness, as experienced from individuals within a team, did not vary across the different allocation rates. Results  indicate a difference in perception of fairness between weak and strong performing players in the case where the allocation more heavily favored the stronger player, while we did not observe a significant difference in performance. 

As robots are becoming more and more commonplace in everyday contexts at work and at home, they increasingly face situations that involve interactions with groups or teams of people. When interacting with multiple people robots have to make decisions about resource distributions. Our work demonstrates that such decisions can be made in ways that not only take task concerns into account but also human concerns of fairness.

\bibliographystyle{ACM-Reference-Format}
\bibliography{references_nw.bib}

\appendix
\section*{Appendix}

\subsection{Notations}

Some notations have already been defined in the main section, but for clarity, we will restate here: Denote $\Delta_i = \mu(i^*)-\mu(i)$. Let $\hat{\mu}_t(i) = \frac{1}{t-1}\sum_{s=1}^{t-1} r_t(i)$ be the mean of empirical rewards for arm $i$ at time $t$ so far and let $n_{t-1}(i)$ be the total number of times arm $i$ has been pulled before time $t$. So $UCB_t(i) = \hat{\mu}_t(i) + \sqrt{\frac{\ln T}{n_{t-1}(i)}}$ and $i_t = \argmax_{i \in [K]} UCB_t(i)$.

\subsection{Auxiliary Theorems and Lemmas}

\begin{theorem} [Hoeffding's Inequality] 
Let $X_1,...,X_T \in [-B,B]$ for some $B>0$ be independent random variables such that $\E[X_t]=0, \forall t\in[T]$, then we have for all $\delta \in (0,1)$,
\begin{align*}
    Pr(\frac{1}{T}\sum_{t=1}^T X_t \geq B\sqrt{\frac{2\ln{\frac{1}{\delta}}}{T}})\leq \delta
\end{align*}
\end{theorem}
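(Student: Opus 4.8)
The plan is to use the classical Cram\'er--Chernoff (exponential-moment) method. First I would fix an arbitrary $\lambda > 0$ and an arbitrary threshold $u > 0$, and apply Markov's inequality to the nonnegative random variable $\exp\!\left(\lambda \sum_{t=1}^T X_t\right)$, using independence of the $X_t$ in the last step:
\begin{align*}
    \Pr\!\left( \sum_{t=1}^T X_t \geq u \right) \leq e^{-\lambda u}\, \E\!\left[ \exp\!\left( \lambda \sum_{t=1}^T X_t \right) \right] = e^{-\lambda u} \prod_{t=1}^T \E\!\left[ e^{\lambda X_t} \right].
\end{align*}

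The one genuinely nontrivial step is bounding each moment generating function: I claim that for a zero-mean random variable $X \in [-B,B]$ one has $\E[e^{\lambda X}] \leq e^{\lambda^2 B^2 / 2}$ (Hoeffding's lemma). I would prove this by convexity of $x \mapsto e^{\lambda x}$: for $x \in [-B,B]$, writing $x$ as the convex combination $x = \frac{B-x}{2B}(-B) + \frac{B+x}{2B}(B)$ gives $e^{\lambda x} \leq \frac{B-x}{2B} e^{-\lambda B} + \frac{B+x}{2B} e^{\lambda B}$; taking expectations and using $\E[X]=0$ annihilates the linear term, leaving $\E[e^{\lambda X}] \leq \frac{1}{2} e^{-\lambda B} + \frac{1}{2} e^{\lambda B} = \cosh(\lambda B)$. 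Finally $\cosh(y) \leq e^{y^2/2}$ follows by comparing Taylor series coefficients, since $(2k)! \geq 2^k k!$. (An alternative I could use is to set $\psi(\lambda) = \log \E[e^{\lambda X}]$, note $\psi(0)=\psi'(0)=0$, observe that $\psi''(\lambda)$ is the variance of $X$ under an exponentially tilted law supported in $[-B,B]$ and hence $\leq B^2$, and integrate twice.)

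Combining the two displays, $\Pr\!\left(\sum_t X_t \geq u\right) \leq \exp\!\left(-\lambda u + T \lambda^2 B^2 / 2\right)$ for every $\lambda > 0$. I would then optimize the exponent over $\lambda$: it is minimized at $\lambda = u/(T B^2)$, which yields $\Pr\!\left(\sum_t X_t \geq u\right) \leq \exp\!\left(-u^2/(2 T B^2)\right)$. Substituting $u = B\sqrt{2 T \ln(1/\delta)}$ makes the right-hand side exactly $\delta$, and since $u/T = B\sqrt{2\ln(1/\delta)/T}$, dividing the event through by $T$ gives the stated bound $\Pr\!\left(\frac{1}{T}\sum_{t=1}^T X_t \geq B\sqrt{2\ln(1/\delta)/T}\right) \leq \delta$.

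The only real obstacle is Hoeffding's lemma (the sub-Gaussian moment generating function bound for bounded zero-mean variables); the exponential Markov step, the product over independent factors, and the one-variable optimization in $\lambda$ are all routine.
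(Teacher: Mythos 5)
Your proof is correct and complete: the exponential Markov bound, the convexity proof of Hoeffding's lemma (including the $\cosh(y)\leq e^{y^2/2}$ step via $(2k)!\geq 2^k k!$), the optimization $\lambda = u/(TB^2)$, and the substitution $u = B\sqrt{2T\ln(1/\delta)}$ all check out. Note that the paper states this result only as a cited auxiliary theorem and offers no proof of its own, so there is nothing to compare against; your argument is the standard Cram\'er--Chernoff route and is exactly what one would supply here.
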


\begin{lemma}
\label{lem:random hoeff 1}
For all arm $i$, if the possible value range of $n_{t-1}(i)$ is $[k_s,k_e]$, then 
\begin{align*}
    Prob\left[\mu(i) - \hat{\mu}_t(i) \geq 2\sqrt{\frac{\ln T}{n_{t-1}(i)}} \right] \leq \sum_{k=k_s}^{k_e} \frac{1}{T^2}
\end{align*}

\end{lemma}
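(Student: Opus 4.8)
\emph{Proof plan.} The difficulty is that $n_{t-1}(i)$, the number of pulls of arm $i$ before time $t$, is itself a random variable, so Hoeffding's inequality cannot be applied directly to $\hat{\mu}_t(i)$; the standard fix is to reindex the rewards of arm $i$ by ``number of pulls so far'' and then take a union bound over the possible values of $n_{t-1}(i)$. First I would introduce, for the fixed arm $i$, the i.i.d.\ reward tape $r_{i,1},r_{i,2},\dots$, where $r_{i,s}$ is the reward delivered the $s$-th time arm $i$ is pulled; each $r_{i,s}\in[0,1]$ has mean $\mu(i)$. Because the $s$-th reward ever observed from arm $i$ is always the $s$-th tape entry, on the event $\{n_{t-1}(i)=k\}$ we have the pathwise identity $\hat{\mu}_t(i) = \bar{\mu}_{i,k}$, where $\bar{\mu}_{i,k}\defeq \frac{1}{k}\sum_{s=1}^k r_{i,s}$.

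Next I would bound the probability for a single fixed $k$. Applying the stated Hoeffding inequality to the centered variables $X_s = \mu(i)-r_{i,s}$, which are independent, zero-mean, and lie in $[-1,1]$ so that we may take $B=1$: choosing the confidence level $\delta = 1/T^2$ makes $B\sqrt{2\ln(1/\delta)/k} = 2\sqrt{\ln T/k}$, and therefore
\begin{align*}
\Pr\!\left[\mu(i) - \bar{\mu}_{i,k} \geq 2\sqrt{\tfrac{\ln T}{k}}\right] \leq \frac{1}{T^2}.
\end{align*}

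Finally, using the pathwise identity above I would write
\begin{align*}
\left\{\mu(i) - \hat{\mu}_t(i) \geq 2\sqrt{\tfrac{\ln T}{n_{t-1}(i)}}\right\} \;\subseteq\; \bigcup_{k=k_s}^{k_e} \left\{\mu(i) - \bar{\mu}_{i,k} \geq 2\sqrt{\tfrac{\ln T}{k}}\right\},
\end{align*}
since $n_{t-1}(i)$ only takes values in $[k_s,k_e]$; a union bound over $k$ then yields the claimed $\sum_{k=k_s}^{k_e}1/T^2$. Note that the union bound discards the conditioning event $\{n_{t-1}(i)=k\}$, so Hoeffding is only ever invoked on the unconditional (genuinely i.i.d.) tape average $\bar{\mu}_{i,k}$, which sidesteps any worry about the random number of pulls disturbing independence. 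The only points needing care are the tape bookkeeping that yields $\hat{\mu}_t(i)=\bar{\mu}_{i,k}$ on $\{n_{t-1}(i)=k\}$ and the arithmetic choice $\delta=1/T^2$ that produces the factor $2$ in the confidence width; I do not expect a deeper obstacle.
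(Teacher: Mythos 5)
Your proposal is correct and follows essentially the same route as the paper: both introduce an i.i.d.\ reward tape indexed by pull count so that $\hat{\mu}_t(i)$ coincides with a fixed-length tape average on the event $\{n_{t-1}(i)=k\}$, apply Hoeffding with $\delta=1/T^2$ for each fixed $k$, and take a union bound over $k\in[k_s,k_e]$. No substantive differences.
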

\begin{proof}
We want to bound them by Hoeffding's Inequality, however, one trap here is that $n_{t-1}(i)$ is actually a random variable depending on all the rewards decided by the environment. To deal with this issue, imagine there is a infinite sequence of $X_1(i),X_2(i)...$ of independent samples of $\calD_i$ for each action $i$ and at time t observed reward $r_t(i_t)$ is the $n_t(i_t)$-th sample of this sequence, that is, $r_t(i_t) = X_{n_t(i_t)}(i_t)$. So $\hat{\mu}_{t-1}(i)$ as be written as $\tilde{\mu}_{n_{t-1}(i)}(i) = \frac{1}{n_{t-1}(i)}\sum_{k=1}^{n_{t-1}(i)}X_k(i)$.

So now we want to know what is the possible value of $n_{t-1}(i)$. According to the assumption $n_{t-1}(i) \in [k_s,k_e]$, we have,
\begin{align*}
    &Prob\left[\mu(i) - \hat{\mu}_{t-1}(i) \geq 2\sqrt{\frac{\ln T}{n_{t-1}(i)}} \right]\\
    &\leq Prob\left[\exists k \in [k_s, k_e] \quad s.t. \mu(i) - \tilde{\mu}_{ k}(i) \geq 2\sqrt{\frac{\ln T}{k}} \right]\\
    &\leq \sum_{k=k_s}^{k_e } Prob\left[\mu(i) - \tilde{\mu}_{k}(i) \geq 2\sqrt{\frac{\ln T}{k}} \right]\\
    & \leq \sum_{k=k_s}^{k_e} \frac{1}{T^2} 
\end{align*}
The penultimate inequality is by hoeffding's inequality.
\end{proof}

\subsection{Proof for Algorithm 1}

\subsubsection{Notations}
 We define $\calI$ as the set of "non-prescheduled" time slots among $T$. Let $m_{t-1}(i)$ be the total number of times arm $i$ has been pulled before time $t$ and among $\calI$, so $m_{t-1}(i) \leq n_{t-1}(i)$. Also $\calI[i]$ means the $i$-th time slot in $\calI$.

\subsubsection{Main Proof}
\textbf{First we rewrite this regret in the form of variable $\Delta_i$ and $m_T$},
\begin{align*}
\E \left[ \sum_{t\in \calI} r_t(i^*) - r_t(i_t) \right] 
& = \E \left[   \sum_{t\in  \calI} \mu^* - \mu(i_t) \right]\\
& = \E \left[   \sum_{t\in  \calI} \Delta_{i_t} \right]\\
& = \sum_{i \neq i^*} \Delta_i \E \left[m_T(i) \right]
\end{align*}
Here the first expectation is regarding to the whole environment randomness through $T$. The first equality comes from $\E_{\text{env at t}}(r_t) = \mu$.

\textbf{Next we want to bound $\E\left[m_T\right] $ following the similar idea as in the original UCB paper.} 
\begin{align*}
\E\left[m_T(i)\right] 
&= m + \sum_{t \in \calI, t> \calI[m]} Prob\left[ (i_t = i)\quad  and \quad m_{t-1} \geq m \right] \\
& \leq m + \sum_{t \in \calI, t> \calI[m]}  \underbrace{Prob \left[UCB_{t}(i)>UCB_{t}(i^{*})\quad and \quad m_{t-1} \geq m  \right]}_{\textsc{Term1}}
\end{align*}
Here $m$ can be any non-negative integer. In the later analysis, choice of $m$ helps us to get a tighter bound.

\textbf{Now we analyze the \textsc{Term1}}.

\begin{align*}
    \textsc{Term1}
    &\leq Prob\left[ UCB_{t}(i^{*}) < \mu(i^{*}) \right] + Prob\left[ UCB_{t}(i) > \mu(i^{*}) \quad and \quad  m_{t-1}>m \right] \\
    & \leq Prob\left[\mu(i^{*}) - \hat{\mu}_t(i^{*}) \geq 2\sqrt{\frac{\ln T}{n_{t-1}(i^{*})}} \right] \\
    & \quad + Prob\left[ \hat{\mu}_t(i) -  \mu(i) \geq \Delta_i - 2\sqrt{\frac{\ln T}{n_{t-1}(i)}} \quad and \quad  m_{t-1}>m \right] \\   
\end{align*}
First, observe that $Prob\left[\mu(i^{*}) - \hat{\mu}_t(i^{*}) \geq 2\sqrt{\frac{\ln T}{n_{t-1}(i^{*})}} \right]$ has nothing to do with $m$, we can directly apply Lemma~\ref{lem:random hoeff 1} to get upper the bound. So now we want to know what is the $[k_s,k_e]$. First, because we played each arm once at the beginning, $n_{t-1}(i)$ should at least be $1$. Then, because at time $t$ there will $\lfloor (t-1-K)v\rfloor$ blocks and in each block we pull each arm at least once due to pre-scheduling, so  $k_s = \lfloor (t-1-K)v\rfloor + 1$.  Finally, because each arm will have been pulled at least $k_s$ times, so $k_e = t-1- (K-1)k_s$. So the upper bound is 
\begin{align*}
     \sum_{k=k_s}^{t-(K-1)k_s-1} \frac{1}{T^2} & \leq \sum_{k=1}^{T-K\lfloor (T-1-K)v\rfloor}\frac{1}{T^2}\\  
     & \leq \frac{(1-Kv)}{T} + \frac{Kv+K^2v + 1}{T^2} \leq \frac{(1-Kv)}{T} + \order(\frac{K}{T^2}) 
\end{align*}

Then we are going to deal with $Prob\left[ \hat{\mu}_t(i) -  \mu(i) \geq \Delta_i - 2\sqrt{\frac{\ln T}{n_{t-1}(i)}} \quad and \quad  m_{t-1}>m \right]$. Again we want to use the Lemma~\ref{lem:random hoeff 1}, but we need to choose $m$ at first. The reason we want to choose $m$ is that we consider that in the first $m$ epochs the bound will be very loose, so we can directly bound the probability by $1$.

Note that we can easily make connections between $n_{t-1}(i)$ and $m$,
\begin{align*}
    n_{t-1}(i)  
    & \geq \lfloor v(t-1-K) \rfloor + m_{t-1} + 1  \\
    & \geq (m_{t-1}-\frac{1}{v} + K)*\frac{1}{\frac{1}{v} - K } + m_{t-1} + 1\\
    & \geq m_{t-1}(1+ \frac{v}{1-Kv}) >  m(1+ \frac{v}{1-Kv})
\end{align*}

By choosing $m = \lfloor \frac{16\ln T }{ \Delta_i ^2}* \frac{1-Kv}{1-(K-1)v} \rfloor$,
\begin{align*}
    \Delta_i - 2\sqrt{\frac{\ln T}{n_{t-1}(i)}} 
    =4\sqrt{\frac{\ln T }{m(1+ \frac{v}{1-Kv})}} - 2\sqrt{\frac{\ln T}{n_{t-1}(i)}}
    > 2\sqrt{\frac{\ln T}{n_{t-1}(i)}} 
\end{align*}

Again replace the above result in the probability bound and use  Lemma~\ref{lem:random hoeff 1} as before, we get 
\begin{align*}
    &Prob\left[ \hat{\mu}_t(i) -  \mu(i) \geq \Delta_i - 2\sqrt{\frac{\ln T}{n_{t-1}(i)}} \quad and \quad m_{t-1}>m \right]\\
    &\leq Prob\left[ \hat{\mu}_t(i) -  \mu(i) \geq 2\sqrt{\frac{\ln T}{n_{t-1}(i)}}  \right]
    \leq \frac{(1-Kv)}{T} + \order(\frac{K}{T^2})
\end{align*}

\textbf{Therefore, we conclude bound for  $i\neq i^{*}$} that 
\begin{align*}
\E [m_T(i)] 
&\leq  \frac{16\ln T}{\Delta_i ^2}\left( \frac{1-Kv}{1-(K-1)v} \right) +\sum_{t \in \calI, t> \calI[m]} \left(2\frac{(1-Kv)}{T} + 2\order(\frac{K}{T^2}) \right)\\
&\leq  \frac{16\ln T}{\Delta_i ^2}\left( \frac{1-Kv}{1-(K-1)v} \right) + 2(1-Kv)^2 + 2\frac{(1-Kv)}{Tv} + 2\order(\frac{K}{T}) \\
&\leq  \frac{16\ln T}{\Delta_i ^2}\left( \frac{1-Kv}{1-(K-1)v} \right) +  2(1-Kv)^2 + \order(1) 
\end{align*}
\textbf{Now we can get the total regret is: }
\begin{align*}
    Reg_T 
    & =\sum_{i \neq i^*} \Delta_i \E \left[m_T(i) \right]\\
    &\leq \sum_{i:\Delta_i > 0}  \left[ \frac{16\ln T}{\Delta_i}\left( \frac{1-Kv}{1-(K-1)v} \right)+2(1-Kv)^2\Delta_i  \right] + \order (K) 
\end{align*}

This bound is not always tight, because when $\Delta \rightarrow \order{(\frac{1}{T})}$ and $v \ll \frac{1}{K}$, this bound will become linear. Therefore, for any $\Delta \in [0,1]$ we can further write that as
\begin{align*} 
    Reg_T 
    &= \sum_{\Delta_i \leq \Delta} \Delta_i \E \left[m_T(i)\right] + \sum_{\Delta_i > \Delta} \Delta_i \E \left[m_T(i)\right] \\
    &\leq \Delta (T - K\lfloor Tv \rfloor) + \sum_{\Delta_i > \Delta} \left[ \frac{16\ln T}{\Delta_i}\left( \frac{1-Kv}{1-(K-1)v} \right) +2(1-Kv)^2\Delta_i \right] + \order (K) 
\end{align*}

By choosing $\Delta = \sqrt{\frac{K\ln T}{T}}$, we got the worst case guarantee,
\begin{align*}
    Reg_T 
    & \leq T\Delta + \sum_{\Delta_i > \Delta} \left[ \frac{16\ln T}{\Delta_i}+2\Delta_i   \right]  + \order (K) \\
    & \leq \order (\sqrt{TK\ln T} + K\ln T ) \\
\end{align*}

\subsection{Proof for algorithm 2 }

\subsubsection{Notations}
Denote the distribution over arms at time $t$ as $p_t$ where $p_t(\argmax_{i \in [K]}UCB_t(i))=(1-Kv)+v$ and $p_t(i) = v, \forall i \in [K]\setminus{\argmax_{i \in [K]}UCB_t(i)}$. And the best distribution as $p^*$ where $p^*(i^*)=(1-Kv)+v$ and $p^*(i) = v, \forall i \in [K]\setminus{i^*}$.

\subsubsection{Main Proof}
\textbf{First we rewrite this regret in the form of variable $\Delta_i$ and $m_T$},
\begin{align*}
    & \E \left[ \sum_{t=1}^T \E_{i_t\sim p^*}[r_t(i_t)] - \E_{i_t \sim p_t}[r_t(i_t)] \right] \\
    & = \E \left[ \sum_{t=1}^T \E_{i_t\sim p^*}[\mu(i_t)] - \E_{i_t \sim p_t}[\mu(i_t)] \right] \\
    &= \E  \sum_{t=1}^T \left[ (1-(K-1)v)\mu(i^*) + v\sum_{i\neq i^*}\mu(i) - p_t(i^*)\mu(i^*) - \sum_{i\neq i^*}p_t(i)\mu(i) \right]\\
    &= \E \sum_{t=1}^T \left[ (1-p_t(i^*))\mu(i^*)- \sum_{i\neq i^*}p_t(i)\mu(i) + v\sum_{i\neq i^*}(\mu(i) - \mu(i^*))  \right]\\ 
    &= \E  \sum_{t=1}^T \left[\sum_{i\neq i^*} p_t(i)\Delta_i - v\sum_{i\neq i^*} \Delta_i \right]\\
    &= \sum_{i\neq i^*} \Delta_i \E \left[\sum_{t=1}^T p_t(i)\right] -  vT\sum_{i\neq i^*} \Delta_i \\
    &= \sum_{i\neq i^*} \Delta_i \E \left[\sum_{t=1}^T \one\{i_t =i \}\right] -  vT\sum_{i\neq i^*} \Delta_i \\
    &= \sum_{i\neq i^*} \Delta_i \E \left[n_T(i)\right] -  vT\sum_{i\neq i^*} \Delta_i 
\end{align*}
Notice here the expectation is regarding to the both the randomness of environment and learner's choice of $i_t$, which is a bit different from previous proof. The penultimate equality is due to $\E_{\text{learner at t}}[\one\{i_t =i \}] = p_t(i)$ and the linearity of expectation.

\textbf{Next we want to bound $\E\left[n_T\right] $ following the similar idea as in the original UCB paper.} 
\begin{align*}
\E[n_T(i)] &= n + \sum_{t=n+1}^T Prob\left[ (i_t = i)\quad  and \quad n_{t-1}>n \right] \text{  ($n$ here is simply for analysis)} \\
& \leq n + \sum_{t=n+1}^T \left[ (1-Kv)*\underbrace{Prob(UCB(i)>UCB(i^{*})\quad and \quad n_{t-1}(i)>n)}_{\text{Term1}} + v  \right]\\ 
\end{align*}

Here $n$ can be any non-negative integer. In the later analysis, choice of $n$ helps us to get a tighter bound.

\textbf{Now we analyze the \textsc{Term1}} using almost the same technique as the proof for algorithm 1.

\begin{align*}
    \textsc{Term1}
    &\leq Prob\left[ UCB_{t}(i^{*}) < \mu(i^{*}) \right] + Prob\left[ UCB_{t}(i) > \mu(i^{*}) \quad and \quad  n_{t-1}>n \right] \\
    & \leq Prob\left[\mu(i^{*}) - \hat{\mu}_t(i^{*}) \geq 2\sqrt{\frac{\ln T}{n_{t-1}(i^{*})}} \right] \\
    & \quad + Prob\left[ \hat{\mu}_t(i) -  \mu(i) \geq \Delta_i - 2\sqrt{\frac{\ln T}{n_{t-1}(i)}} \quad and \quad  n_{t-1}>n \right] \\   
\end{align*}

First, observe that $Prob\left[\mu(i^{*}) - \hat{\mu}_t(i^{*}) \geq 2\sqrt{\frac{\ln T}{n_{t-1}(i^{*})}} \right]$ has nothing to do with $n$, thus we can directly apply Lemma~\ref{lem:random hoeff 1} to get the upper  bound. Again we want to know the $[k_s,k_e]$. Since there is no prescheduling here, the interval is $[1,t-K]$. So the upper bound is 
\begin{align*}
     \sum_{k=1}^{t-K} \frac{1}{T^2} < \frac{1}{T} 
\end{align*}

Then we address $Prob\left[ \hat{\mu}_t(i) -  \mu(i) \geq \Delta_i - 2\sqrt{\frac{\ln T}{n_{t-1}(i)}} \quad and \quad  n_{t-1}>n \right]$. Again we want to use the Lemma~\ref{lem:random hoeff 1}, but we need to choose $n$ at first. The reason we want to choose $n$ is that we consider the bound to be very loose in the first $n$ epochs, so we can directly bound the probability by $1$. 

We also need to consider the extreme case where $v \rightarrow \frac{1}{K}$, so $(1-Kv) \rightarrow 0$. In that case, the choice of the arm is random and irrelevant to the UCB bound, so we can simply choose $n=0$ and all the probabilities will be naturally bounded by $1$. 

Therefore, we will consider two cases, $n = [\frac{16\ln T}{\Delta_i ^2}]$ and $n = 0$.

When $n = 0$, we simply bound the probability by $1$. 

If $n = [\frac{16\ln T }{\Delta_i ^2}]$, we observe that 
\begin{align*}
    \Delta_i - 2\sqrt{\frac{\ln T}{n_{t-1}(i)}} 
    =4\sqrt{\frac{\ln T}{n}} - 2\sqrt{\frac{\ln T}{n_{t-1}(i)}}
    > 2\sqrt{\frac{\ln T}{n_{t-1}(i)}} 
\end{align*}
We can again apply Lemma~\ref{lem:random hoeff 1} as before and get 
\begin{align*}
    &Prob\left[ \hat{\mu}_t(i) -  \mu(i) \geq \Delta_i - 2\sqrt{\frac{\ln T}{n_{t-1}(i)}} \quad and \quad n_{t-1}>n \right]\\
    &\leq Prob\left[ \hat{\mu}_t(i) -  \mu(i) \geq 2\sqrt{\frac{\ln T}{n_{t-1}(i)}}  \right]
    \leq \frac{1}{T}
\end{align*}

\textbf{Therefore, combine the two cases, we conclude bound for  $i\neq i^{*}$} 
\begin{align*}
    \E[n_T(i)] \leq \min\left\{ \frac{16\ln T}{\Delta_i ^2} + (1-Kv), (1-Kv)T \right\} +  vT
\end{align*}

\textbf{Now we can get the total regret is: }
\begin{align*}
    Reg_T  &= \sum_{i\neq i^*} \Delta_i \E \left[n_T(i)\right] -  vT\sum_{i\neq i^*} \Delta_i \\
    & \leq \sum_{i:\Delta_i > 0}  \left[ \min \left\{\frac{16\ln T}{\Delta_i}+(1-Kv)\Delta_i, (1-Kv)\Delta_iT \right\}\right] 
\end{align*}

This bound is not always tight, because when $\Delta \rightarrow \order{(\frac{1}{T})}$ and $v \ll \frac{1}{K}$, this bound will become linear. Therefore, for any $\Delta \in [0,1]$ we can further write that as
\begin{align*}
    Reg_T 
    &\leq \sum_{\Delta_i \leq \Delta} \Delta_i \E \left[n_T(i)\right] + \sum_{\Delta_i > \Delta} \Delta_i \E \left[n_T(i)\right] -  vT\sum_{\Delta_i > \Delta} \Delta_i \\
    &\leq \Delta T +\sum_{i:\Delta_i > \Delta}  \left[ \min \left\{  \left(\frac{16\ln T}{\Delta_i}+(1-Kv)\Delta_i \right), (1-Kv)\Delta_iT \right\} \right] 
\end{align*}
By choosing $\Delta = \sqrt{\frac{K\ln T}{T}}$, we got the worst case guarantee,
\begin{align*}
    Reg_T 
    & \leq T\Delta + \sum_{\Delta_i > \Delta} \left[ \frac{16\ln T}{\Delta_i}+2\Delta_i   \right]  + \order (K) \\
    & \leq \order (\sqrt{TK\ln T} + K\ln T ) \\
\end{align*}

\end{document}